\def\BibTeX{{\rm B\kern-.05em{\sc i\kern-.025em b}\kern-.08em
    T\kern-.1667em\lower.7ex\hbox{E}\kern-.125emX}}
\title{Over-the-Air Federated Multi-Task Learning} 
\author[$\dag$]{Haoming Ma}  
\author[$\dag$]{Xiaojun Yuan}  
\author[$\dag$]{Dian Fan}  
\author[$\ddag$]{Zhi Ding}  
\author[$\sharp$]{Xin Wang}
\author[$\dag$]{Jun Fang} 
\affil[$\dag$]{\normalsize The National Key Lab. of Sci. and Tech. on Commun., Uni. of Electronic Sci. and Tech. of China, Chengdu, China}  
\affil[$\ddag$]{Department of Electrical and Computer Engineering, University of California at Davis, Davis, United States}  
\affil[$\sharp$]{The Key Laboratory for Information Science of Electromagnetic Waves (MoE), Fudan University, Shanghai, China} 
\begin{document}

\maketitle

\begin{abstract}
In this letter, we introduce over-the-air computation into the communication design of federated multi-task learning (FMTL), and propose an over-the-air federated multi-task learning (OA-FMTL) framework, where multiple learning tasks deployed on edge devices share a non-orthogonal fading channel under the coordination of an edge server (ES). Specifically, the model updates for all the tasks are transmitted and superimposed concurrently over a non-orthogonal uplink fading channel, and the model aggregations of all the tasks are reconstructed at the ES through a modified version of the turbo compressed sensing algorithm (Turbo-CS) that overcomes inter-task interference. Both convergence analysis and numerical results show that the OA-FMTL framework can significantly improve the system efficiency in terms of reducing the number of channel uses without causing substantial learning performance degradation.
\end{abstract}

\begin{IEEEkeywords}
Federated multi-task
learning, over-the-air computation, turbo compressed sensing.
\end{IEEEkeywords}

\section{Introduction}
With a massive amount of data at wireless edge devices, federated learning (FL) \cite{goetz_active_2019} has emerged as a popular framework for training machine learning models in a confidential and distributive manner. In general, FL requires uploading local model parameters from devices to a specific edge server (ES). Owing to massively distributed devices as well as a huge amount of model parameters, limited channel resource of uplink communication (e.g., bandwidth, time and space) poses a major bottleneck in the original FL framework. To this end, extensive research effort has recently been devoted to enhance communication-efficiency in FL. For example, the authors in \cite{sattler_sparse_2019,lin_deep_2020} proposed to relieve the uplink burden by sparsifying and compressing the local model updates before transmission. In \cite{amiri_machine_2020,amiri_federated_2020,liu_reconfigurable_2021}, over-the-air computation was used to speed up local model aggregation by exploiting radio superposition over shared physical channels. 

Based on the basic idea of multi-task learning \cite{zhang_survey_2021} and FL, the authors in \cite{smith_federated_2018,dinh_fedu_2021} proposed the federated multi-task learning (FMTL) framework to implement multiple machine learning tasks over the FL network, so that the knowledge contained in a task can be leveraged by other tasks with the hope of improving the generalization performance \cite{zhang_survey_2021}. Despite the appealing aspects of FMTL, the inter-task interference inherent in FMTL hinders the direct implementation of existing transmission protocols \cite{sattler_sparse_2019,lin_deep_2020,amiri_machine_2020,amiri_federated_2020,liu_reconfigurable_2021} designed for FL over wireless networks. To overcome the inter-task interference, a straightforward extension involves separating the uplink transmission for multi-task updates over orthogonal frequency/time sub-channels. However, this frequency/time division approach may be inefficient since the overall channel resource is divided into orthogonal ones to avoid the inter-task interference. By contrary, in this paper, we propose a novel non-orthogonal transmission scheme in the presence of inter-task interference, where the local updates for all the tasks are sent simultaneously over the same fading channel to achieve communication-efficient FMTL.

To be more specific, we investigate the over-the-air FMTL (OA-FMTL) transmission scheme, where multiple tasks deployed on edge devices share a non-orthogonal fading channel under the coordination of an ES. At every edge device, the local model updates of all tasks deployed on devices are first sparsified and compressed individually by following the approach of \cite{sattler_sparse_2019,lin_deep_2020}, prior to being transmitted and aggregated over the uplink channel. Model aggregations of the individual tasks are reconstructed efficiently at the ES by exploiting a novel modified version of the turbo compressed sensing (Turbo-CS) algorithm \cite{ma_turbo_2014}. State evolution and convergence analysis are established to characterize the behavior of the proposed OA-FMTL scheme. Experimental simulations show that our proposed OA-FMTL is able to achieve a learning performance comparable to the signal-task transmission scheme \cite{amiri_machine_2020,amiri_federated_2020} by efficiently suppressing the inter-task interference. In other words, the communication resource required by OA-FMTL is only one \emph{N}-th of the conventional frequency/time division approach, where \emph{N} is the total number of tasks.

\section{System Model}
\subsection{Federated Multi-Task Learning}\label{sec:system,ssec:multi}
We consider an FMTL system with $N$ learning tasks deployed on $M$ wireless local devices with the help of an ES, where the practical task assignment is flexibly determined according to the computation power and storage capability of each device, as depicted in Fig. \ref{sec:system,ssec:multi,fig:system}. Each task $n$ on device $m$ is associated with its local dataset $D_{nm}$. The FMTL requires the minimization of the total empirical loss function\footnote{Different from the FMTL schemes in \cite{smith_federated_2018} and \cite{dinh_fedu_2021}, we omit a regularization term in the federated optimization objective. The optimization associated with this term is undertaken by the ES independently and thus is irrelevant to the design of communications between the devices and the ES.}, defined as the sum of the losses of the $N$ tasks,
\begin{equation}\label{sec:system,ssec:multi,equ:optimum function}
	\mathcal{L}(\bm{\theta})=\sum_{n=1}^{N} \mathcal{L}_{n}(\bm{\theta}_{n}),
\end{equation}
where $\bm{\theta}=[{\bm{\theta}_1}^T,\dots,{\bm{\theta}_N}^T]^T$ with $\bm{\theta}_{n}\in\mathbb{R}^{d_n}$ being the length-$d_n$ model parameter of task $n$ shared among the ES and participating edge devices; and the empirical loss function of each task $n$ is defined by
\begin{equation}\label{sec:system,ssec:multi,equ:loss function}	
	\mathcal{L}_n(\bm{\theta}_{n})=\frac{\sum_{m=1}^{M} K_{nm} \mathcal{L}_{nm}(\bm{\theta}_{n})}{\sum_{m=1}^{M} K_{nm}},
\end{equation}
with the local empirical loss function of task $n$ on device $m$ defined by 
\begin{equation}
	\mathcal{L}_{nm}(\bm{\theta}_{n})=
	\left\{
	\begin{array}{ll} 			
		\frac{1}{K_{nm}}\sum_{k=1}^{K_{nm}}l_n(\bm{\theta}_{n};\bm{u}_{nmk}), & \text{for } K_{nm}\neq 0,\\ 		
		0, & \text{for } K_{nm}= 0, 		
	\end{array}
	\right.	 	
\end{equation}  
where $l_n(\bm{\theta}_{n}; \bm{u}_{nmk})$ is the sample-wise loss function specified by task $n$, $\mathbf{u}_{nmk}$ denotes the $k$-th local data sample of dataset $D_{nm}$, $K_{nm}$ is the cardinality of $D_{nm}$, and $[k]$ denotes the integer set $\{1,\dots,k\}$. Note that $K_{nm}=0$ means that $D_{nm}$ is empty. 	

\begin{figure}[h] 	
	\centering 	
	\includegraphics[width=0.78\linewidth, height=0.2\textheight]{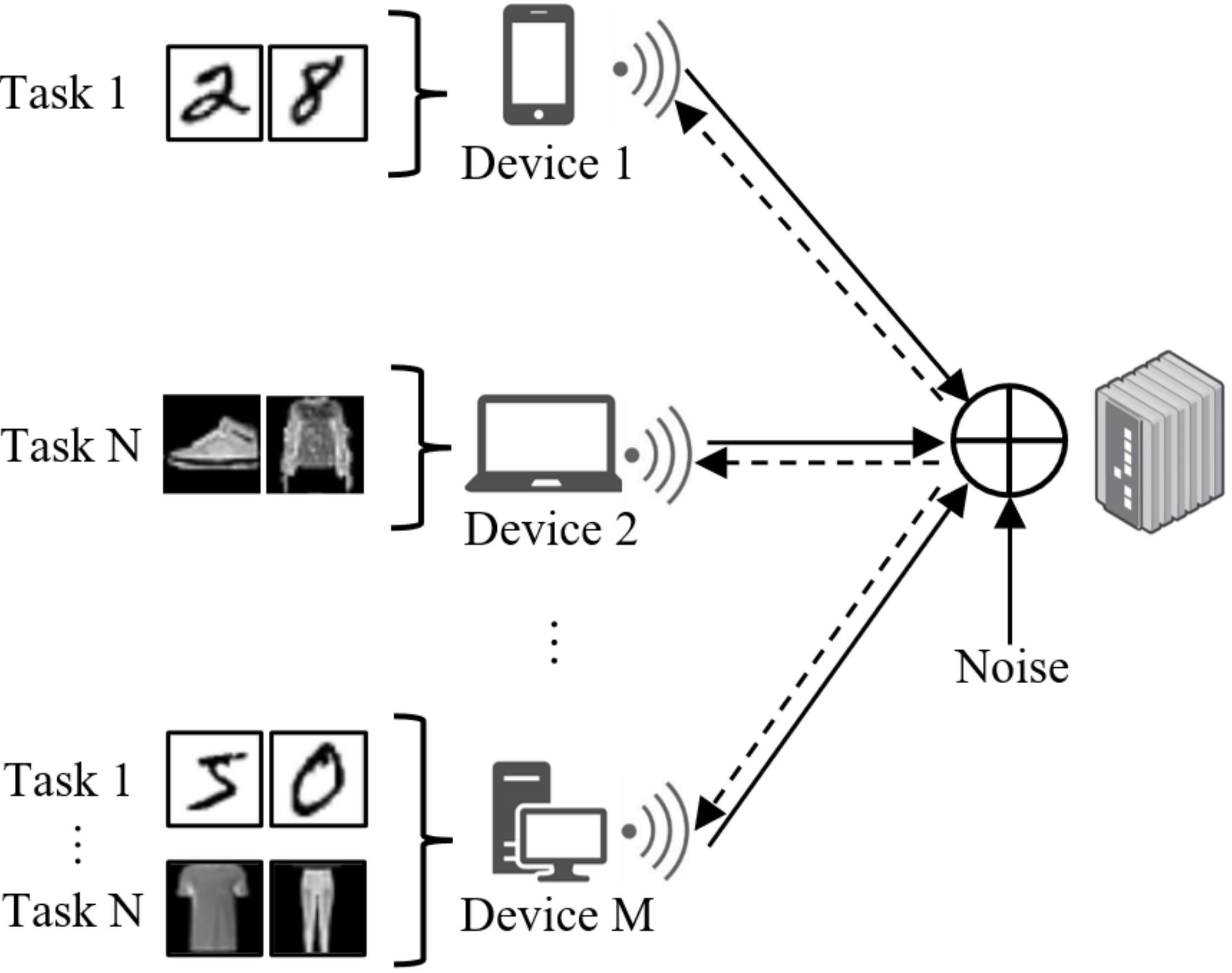} 	
	\caption{An illustration of the FMTL framework.} 	\label{sec:system,ssec:multi,fig:system}
 \end{figure}

The minimization of (\ref{sec:system,ssec:multi,equ:optimum function}) is typically executed through gradient-based update, i.e., at the $t$-th communication round, the global model parameter $\bm{\theta}^{(t)}$ is expected to update via
\begin{equation}\label{update}
	\bm{\theta}^{(t+1)}=\bm{\theta}^{(t)}-\eta\nabla \mathcal{L}(\bm{\theta}^{(t)}),
\end{equation}
where $\nabla=\begin{pmatrix}
	\nabla_1^T,\dots,\nabla_N^T
\end{pmatrix}^T$, the gradient operator $\nabla_n$ is with respect to the segment $\bm{\theta}_n$, and $\eta$ is the predetermined learning rate. Combining (\ref{sec:system,ssec:multi,equ:optimum function}) and (\ref{update}), the parameter segment $\bm{\theta}_{n}^{(t)}$ of each task $n$ is expected to be updated via
\begin{subequations}\label{model_update} 
	\begin{align}		\bm{\theta}_{n}^{(t+1)}&=\bm{\theta}_{n}^{(t)}-\eta\nabla_n \mathcal{L}_n(\bm{\theta}_{n})\\ 		\nonumber&=\bm{\theta}_{n}^{(t)} - \eta\frac{\sum_{m=1}^{M} K_{nm}\bm{g}_{nm}^{(t)}}{\sum_{m=1}^MK_{nm}}, \forall n \in [N],\tag{5b} 	
	\end{align}
\end{subequations}
where the local gradient $\bm{g}_{nm}^{(t)}=\nabla_n \mathcal{L}_{nm}(\bm{\theta}_{n})\in\mathbb{R}^{d_n}$. In practice, the local gradients $\{\bm{g}_{nm}^{(t)}\}_{n=1}^N$ from each device $m$ are sent to the ES over a wireless uplink to complete the updating of model parameter as in (\ref{model_update}) subject to some transmission error. After model updating, $\{\bm{\theta}_{n}^{(t+1)}\}_{n=1}^N$ are broadcast to all the devices by the ES over a wireless downlink to synchronize the learning tasks among the devices. The iteration process in (\ref{model_update}) continues until the learning tasks converge.

\subsection{Over-the-Air Channel Model}
We now describe the wireless channels used to support the above FMTL process. Following the convention in \cite{amiri_machine_2020,amiri_federated_2020,liu_reconfigurable_2021}, we assume that the downlink transmission from the ES to the devices is error-free, and focus on the uplink. We model the wireless uplink as a block fading channel with the channel state information unchanged within each communication round. Noting that the update in (\ref{model_update}) depends exclusively on the weighted sum of the local gradients, we employ the over-the-air computation to reduce the usage of channel resources. Specifically, at the $t$-th communication round, in an analog fashion, every device equipped with an individual antenna synchronously sends its channel input vector to the ES over a block fading channel with $s$ frequency/time channel uses (with $s\le d$), characterized by:
\begin{equation}\label{sec:system,ssec:trans,equ:channel}
	\bm{r}^{(t)}=\sum_{m=1}^{M}h_m^{(t)}\bm{s}_m^{(t)} + \bm{w},	
\end{equation}
where $\bm{s}_m^{(t)}\in \mathbb{C}^s$ is the channel input vector from device $m$, with the details specified later in Section \ref{sec:system}, $h_m^{(t)}\in\mathbb{C}$ is the channel gain from device $m$ to the ES, $\bm{r}^{(t)}\in \mathbb{C}^s$ is the channel output received by the ES, and $\bm{w} \in \mathbb{C}^s$ is an independent additive white Gaussian noise (AWGN) with each element independent and identically distributed (i.i.d.) as $\mathcal{CN}(0,\sigma_w^2)$. During the training process, the power consumption of device $m$ at each round $t$ is constrained by 	\begin{equation}\label{sec:system,ssec:trans,equ:power constraint}
	||\bm{s}_m^{(t)}||^2 \leq P,
\end{equation}
where $P$ is the common power budget of each device and $||\cdot||$ denotes the $l_2$ norm.

The remaining issue is to map the real vectors $\{\bm{g}_{nm}^{(t)}\}_{n=1}^N$ to the complex vector $\bm{s}_m^{(t)}$ at device $m$ in each communication round $t$, and to recover an approximate estimate of $\nabla_n \mathcal{L}_n(\bm{\theta}_{n})$ from $\bm{r}_{(t)}$ for each task $n$ at the ES. As inspired by \cite{amiri_federated_2020,amiri_machine_2020,ma_turbo_2014,seide_1-bit_2014}, we employ analog compressed sensing and error accumulation techniques to combat the effect of channel imperfection. The details of the uplink transceiver design are presented in what follows.

\section{Over-the-Air Federated Multi-Task Learning Framework}\label{sec:system}

\subsection{Transceiver Design of OA-FMTL}

\subsubsection{Transmitter Design}\label{sec:system,ssec:trans}
We process the local gradients of each task $n$ on device $m$ by essentially following the approach in \cite{amiri_federated_2020}. Specifically, at each round $t$, device $m$ adds $\bm{g}_{nm}^{(t)}$ defined in (\ref{model_update}) with the error accumulation term $\bm{\triangle}_{nm}^{(t)}\in \mathbb{R}^{d_n}$ as
\begin{equation}\label{sec:system,ssec:trans,equ:error accumulated}
	\bm{g}_{nm}^{\operatorname{ac}(t)} = \bm{g}_{nm}^{(t)} + \bm{\triangle}_{nm}^{(t)},\forall m\in[M],\forall n\in[N],
\end{equation}
where $\bm{\triangle}_{nm}^{(t)}$ is accumulated in the previous rounds with $\bm{\triangle}_{nm}^{(1)}$ initialized to $\bm{0}$. Then device $m$ sets all the elements of $\bm{g}_{nm}^{\operatorname{ac}(t)}\in\mathbb{R}^{d_n}$ but the $k_n$ elements with the greatest absolute values to zero, defined by
\begin{equation}\label{sec:system,ssec:trans,equ:sparse} 	
	\bm{g}_{nm}^{\operatorname{sp}(t)} = \operatorname{sp}(\bm{g}_{nm}^{\operatorname{ac}(t)}, k_n)\in\mathbb{R}^{d_n},\forall m\in[M],\forall n\in[N].
\end{equation}
$\bm{\triangle}_{nm}^{(t)}$ is updated by
\begin{equation}\label{sec:system,ssec:trans,equ:get error}
	\bm{\triangle}_{nm}^{(t+1)} = \bm{g}_{nm}^{\operatorname{ac}(t)} - \bm{g}_{nm}^{\operatorname{sp}(t)},\forall m\in[M],\forall n\in[N].
\end{equation}
Then $\bm{g}_{nm}^{\operatorname{sp}(t)}$ is compressed into a low-dimensional vector $\bm{g}_{nm}^{\operatorname{cp}(t)} \in \mathbb{R}^{2s}$ by a compression matrix $\bm{A}_n\in \mathbb{R}^{2s\times d_n}$ as
\begin{equation}\label{sec:system,ssec:trans,equ:compress}
	\bm{g}_{nm}^{\operatorname{cp}(t)} = \bm{A}_n\bm{g}_{nm}^{\operatorname{sp}(t)},\forall m\in[M],\forall n\in[N].
\end{equation}
We employ a partial discrete cosine transform (DCT) matrix $\bm{A}_n=\bm{S}_n\bm{F}$ for each task $n$, where the selection matrix $\bm{S}_n\in \mathbb{R}^{2s\times d_n}$ consists of $2s$ randomly selected and reordered rows of the $d_n\times d_n$ identity matrix $\bm{I}_{d_n}$ and the $(m,n)$-th entry of the unitary DCT matrix $\bm{F}\in \mathbb{R}^{{d_n}\times {d_n}}$ is given by $\sqrt{\frac{2}{d_n}}\operatorname{cos}\left(\frac{(m-1)(2n-1)\pi}{2d_n}\right)$ when $m\neq1$, or $\sqrt{\frac{1}{d_n}}$ when $m=1$. It is known that, compared to other choices of the compression matrix such as the i.i.d. Gaussian matrix, the partial DCT matrix has advantages both in performance and complexity \cite{ma_performance_2015}.

We are now ready to describe the design of $\bm{s}_m^{(t)}$. As a distinct feature of the OA-FMTL framework, we propose to superimpose the local gradients of different tasks to support the multiplexing of the $N$ learning tasks. In specific, with (\ref{sec:system,ssec:trans,equ:error accumulated})-(\ref{sec:system,ssec:trans,equ:compress}), each device $m$ constructs
\begin{equation}\label{sec:system,ssec:trans,equ:mapping}
	\bm{x}_{m}^{(t)}=\sum_{n=1}^N K_{nm}\bm{g}_{nm}^{\operatorname{cp}(t)}\in\mathbb{R}^{2s},
\end{equation}
which is then converted into a complex vector $\tilde{\bm{x}}_{m}^{(t)}\in\mathbb{C}^{s}$, defined as
\begin{subequations}
	\begin{equation}
		\operatorname{Re}\{\tilde{\bm{x}}_{m}^{(t)}\}\triangleq \left[x_{m,1}^{(t)},\dots,x_{m,s}^{(t)}\right]^T, 	
	\end{equation} 	
	\begin{equation} 	
		\operatorname{Im}\{\tilde{\bm{x}}_{m}^{(t)}\}\triangleq \left[x_{m,s+1}^{(t)},\dots,x_{m,2s}^{(t)}\right]^T,
	\end{equation}
\end{subequations}
where $x_{m,k}^{(t)}$ is the $k$-th entry of $\bm{x}_{m}^{(t)}$. After that, every device $m$ concurrently sends $\bm{s}_m^{(t)}=\alpha_m^{(t)}\tilde{\bm{x}}_{m}^{(t)}$ to the ES by analog transmission, where $\alpha_m^{(t)}\in\mathbb{C}$ is determined by
\begin{equation}\label{sec:system,ssec:trans,equ:power_allocation} 		
	\alpha_{m}^{(t)}=\left\{ 		
	\begin{aligned} 			&\frac{\gamma^{(t)}}{h_{m}^{(t)}},&\text{if}\left|h_{m}^{(t)}\right|\geq\zeta^{(t)},\\ 			&0,&\text{otherwise}, 		\end{aligned} 		\right.
\end{equation} 	where the power coefficient $\gamma^{(t)}\in\mathbb{R}$ and the threshold $\zeta^{(t)}\in\mathbb{R}$ are set to satisfy the average transmit power constraint (\ref{sec:system,ssec:trans,equ:power constraint}) and inverse $h_{m}^{(t)}$ in (\ref{sec:system,ssec:trans,equ:channel}), respectively. Accordingly, the set of devices scheduled to transmit at the $t$-th round is given by 	
\begin{equation}\label{sec:system,ssec:trans,equ:m_size}
	\mathcal{M}^{(t)}=\left\{m \in[M]:|h_{m}^{(t)}|^2\geq\zeta^{(t)}\right\}. 
\end{equation}

\subsubsection{Receiver Design}
\addtolength{\topmargin}{0.05in}
We now describe the receiver design of the ES. We assume that the ES knows the set $\mathcal{M}^{(t)}$, the size of dataset $K_{nm}$ and the power coefficient $\gamma^{(t)}\in\mathbb{R}$ at each round $t$, in advance of the transmission. With (\ref{sec:system,ssec:trans,equ:compress})-(\ref{sec:system,ssec:trans,equ:m_size}) and appropriate scaling, (\ref{sec:system,ssec:trans,equ:channel}) is rewritten as
\begin{align}\label{sec:system,ssec:trans,equ:channel final}
	\bm{y}^{(t)} &= \sum_{n=1}^N\bm{A}_n\bm{g}_n^{(t)}+ \bm{n}
	\\\nonumber&= 
	\begin{bmatrix}
		\bm{A}_1,\dots,\bm{A}_N
	\end{bmatrix}
	\begin{bmatrix}
		\bm{g}_1^{(t)^T},\dots,\bm{g}_N^{(t)^T}
	\end{bmatrix}^T + \bm{n},
\end{align}
where 
$\bm{y}^{(t)} \triangleq \frac{[\operatorname{Re}\{\bm{r}^{(t)}\}^T,\operatorname{Im}\{\bm{r}^{(t)}\}^T]^T}{\gamma^{(t)}\sum_{m\in\mathcal{M}^{(t)}}K_{nm}}$, 
$\bm{n} \triangleq \frac{[\operatorname{Re}\{\bm{w}\}^T,\operatorname{Im}\{\bm{w}\}^T]^T}{\gamma^{(t)}\sum_{m\in\mathcal{M}^{(t)}}K_{nm}}$ follows $\mathcal{N}(0,\sigma^2)$ with $\sigma \triangleq \frac{\sigma_w}{2\gamma^{(t)}\sum_{m\in\mathcal{M}^{(t)}}K_{nm}}$, and
$\bm{g}_n^{(t)} \triangleq\frac{\sum_{m\in\mathcal{M}^{(t)}}K_{nm}\bm{g}_{nm}^{\operatorname{sp}(t)}}{\sum_{m\in\mathcal{M}^{(t)}}K_{nm}}$ is an approximate sparsified version of $\frac{\sum_{m=1}^{M} K_{nm}\bm{g}_{nm}^{(t)}}{\sum_{m=1}^MK_{nm}}$ in (\ref{model_update}). Then, given $\bm{y}^{(t)}$, the ES reconstructs each $\bm{g}_n^{(t)}$ as $\hat{\bm{g}}_n^{(t)}$ for $\forall n$, in practice, which is subsequently used to update the model parameters via
\begin{equation}\label{sec:system,ssec:multi,equ:paramenter update_new}
	\bm{\theta}_{n}^{(t+1)} = \bm{\theta}_{n}^{(t)} - \eta\hat{\bm{g}}_n^{(t)},\forall n \in [N],
\end{equation}
where $\eta$ is defined below (\ref{update}).

The recovery of $\{\bm{g}_n^{(t)}\}_{n=1}^N$ from $\bm{y}^{(t)}$ in (\ref{sec:system,ssec:trans,equ:channel final}) is a compressed sensing problem with the compression matrix $[\bm{A}_1,\dots,\bm{A}_N]$ composed of $N$ partial DCT matrices. Since the compression matrix is partial orthogonal, we propose to follow the idea of Turbo-CS in \cite{ma_turbo_2014} to efficiently solve the compressed sensing problem. As each $\bm{g}_n^{(t)}$ is the gradient for a different task $n$, $\{\bm{g}_n^{(t)}\}_{n=1}^N$ generally have different prior distributions. We assume that the entries of $\bm{g}_n^{(t)}$ are independently drawn from a Bernoulli Gaussian distribution:
\begin{equation}\label{sec:system,ssec:recei,equ:Bernoulli Gaussian}
	g_{n,k}^{(t)} \sim\left\{\begin{array}{ll}
		0, & \text { probability }=1-\lambda_n^{(t)}, \\
		\mathcal{N}\left(0, v_n^{(t)}\right), & \text { probability }=\lambda_n^{(t)},
	\end{array}\right.
\end{equation}
where $g_{n,k}^{(t)}$ is the $k$-th element of $\bm{g}_n^{(t)}$, $\lambda_n^{(t)}$ is the sparsity of $\bm{g}_n^{(t)}$, and $v_n^{(t)}$ is the variance of the nonzero elements in $\bm{g}_n^{(t)}$. The above parameters in the prior distribution is estimated by the Expectation-Maximization algorithm \cite{vila_expectation-maximization_2013}. With the above prior model, we modify the Turbo-CS algorithm accordingly to accommodate the concurrent model aggregation of the $N$ tasks as follows.

\begin{figure}[h] 	\centering 	\includegraphics[width=1\linewidth, height=0.12\textheight]{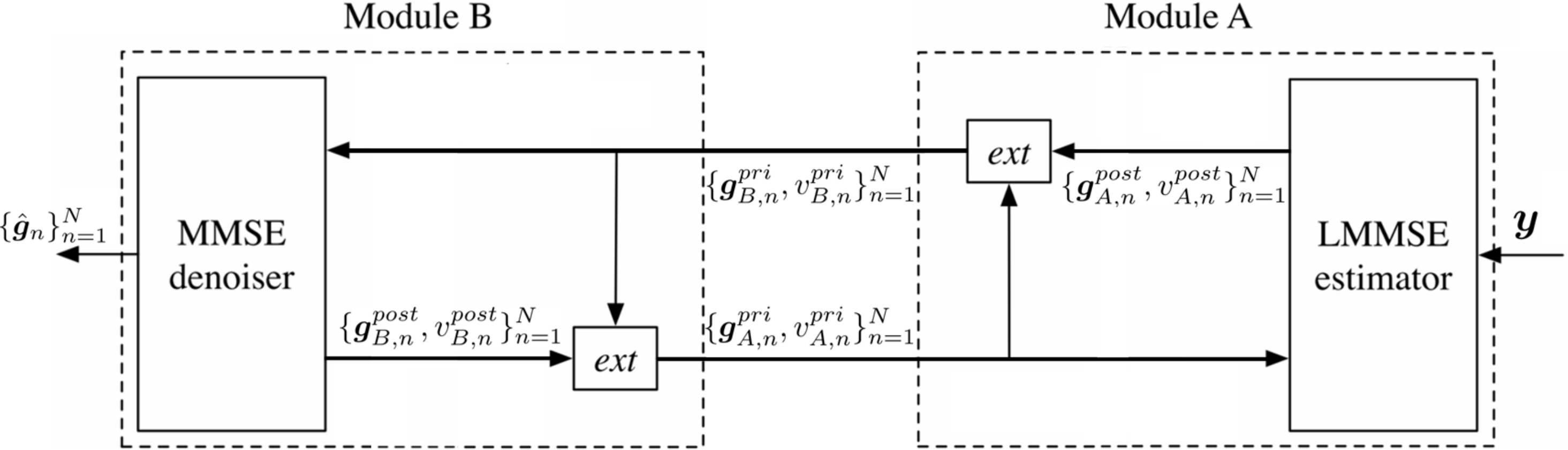} 	\caption{An illustration of the M-Turbo-CS algorithm.} 	\label{sec:system,ssec:recei,fig:Turbo-CS} \end{figure} 
As shown in
Fig. \ref{sec:system,ssec:recei,fig:Turbo-CS}, modified Turbo-CS (M-Turbo-CS) iterates between two modules where module A handles the linear constraint in (\ref{sec:system,ssec:trans,equ:channel final}), and module B denoises the output from module A by exploiting the gradient sparsity in (\ref{sec:system,ssec:recei,equ:Bernoulli Gaussian}). Besides, the iterative process of M-Turbo-CS is carried out at every round $t$ and we drop out the round index $t$ in the following for brevity. At each turbo iteration, given the prior mean $\bm{g}^{pri}_{A,n}\in\mathbb{R}^{d_n}$ and the variance $v^{pri}_{A,n}\in\mathbb{R}$ from module B as well as the observed vector $\bm{y}$ in (\ref{sec:system,ssec:trans,equ:channel final}), the posterior mean $\bm{g}^{post}_{A,n}\in\mathbb{R}^{d_n}$ and the variance $v^{post}_{A,n}\in\mathbb{R}$ of $\bm{g}_n$ are given by
\begin{small}
	\begin{subequations}\label{sec:system,ssec:recei,equ:multi task turbo A}	
		\begin{equation}\label{sec:system,ssec:recei,equ:multi task turbo xA}	
			\bm{g}_{A,n}^{post}=\bm{g}_{A,n}^{pri}+\frac{v_{A,n}^{pri}\bm{A}_n^T(\bm{y}-\sum_{k=1}^N\bm{A}_k\bm{g}^{pri}_{A,k})}{\sum_{k=1}^Nv_{A,k}^{pri}+\sigma^2},\forall n\in[N], 
		\end{equation}
		\vspace{-1.5mm}
		\begin{equation}\label{sec:system,ssec:recei,equ:multi task turbo vA}
			v_{A,n}^{post} = v_{A,n}^{pri} - \frac{s}{d_n}\frac{{v_{A,n}^{pri}}^2}{\sum_{k=1}^N v_{A,k}^{pri}+\sigma^2},\forall n\in[N].	 	
		\end{equation}
	\end{subequations}
\end{small}From (\ref{sec:system,ssec:recei,equ:multi task turbo A}), the prior mean $\bm{g}^{pri}_{B,n}\in\mathbb{R}^{d_n}$ and the variance $v^{pri}_{B,n}\in\mathbb{R}$ of the MMSE denoiser are the extrinsic mean and variance from module A, given by 
\begin{subequations}\label{sec:system,ssec:recei,equ:multi task turbo B_PRI}
	\begin{equation}\label{sec:system,ssec:recei,equ:multi task turbo xB_PRI}
		\bm{g}^{pri}_{B,n} = v^{pri}_{B,n}
		\begin{pmatrix}
			\frac{\bm{g}^{post}_{A,n}}{v^{post}_{A,n}}-\frac{\bm{g}^{pri}_{A,n}}{v^{pri}_{A,n}}
		\end{pmatrix},\forall n\in[N],
	\end{equation}
	\vspace{-3mm}
	\begin{equation}\label{sec:system,ssec:recei,equ:multi task turbo vB_PRI}
		v^{pri}_{B,n} =
		\begin{pmatrix}
			\frac{1}{v^{post}_{A,n}}-\frac{1}{v^{pri}_{A,n}}
		\end{pmatrix}^{-1},\forall n\in[N],
	\end{equation}	
\end{subequations}
respectively. Following \cite{ma_turbo_2014}, each $\bm{g}^{pri}_{B,n}$ is modeled as an observation of $\bm{g}_{n}$ corrupted by additive noise $\bm{n}_{n}$: 
\begin{equation} 		
	\bm{g}_{B,n}^{pri} = \bm{g}_{n} + \bm{n}_{n}, 	
\end{equation}
where $\bm{n}_{n}\sim\mathcal{N}(0,v_{B,n}^{pri})$ is independent of $\bm{g}_{n}$. The posterior mean $\bm{g}^{post}_{B,n}\in\mathbb{R}^{d_n}$ and the variance $v^{post}_{B,n}\in\mathbb{R}$ of the MMSE denoiser are given by
\begin{subequations}\label{sec:system,ssec:recei,equ:multi task turbo B}
	\begin{equation}\label{sec:system,ssec:recei,equ:multi task turbo xB} 		
		\bm{g}_{B,n}^{post} = \mathbb{E}[\bm{g}_{n} \mid \bm{g}_{B,n}^{pri}],\forall n\in[N],
	\end{equation}
	\vspace{-5mm}
	\begin{equation}\label{sec:system,ssec:recei,equ:multi task turbo vB}
		v_{B,n}^{post} = \sum_{k=1}^d
		\operatorname{var}[g_{n,k} \mid g_{B,n,k}^{pri}],\forall n\in[N], 
	\end{equation}
\end{subequations}
where the expectation $\mathbb{E}$ is with respect to $\bm{g}_{n}$, $\operatorname{var}[a|b]=\mathbb{E}[|a-\mathbb{E}[a|b]|^2|b]$ and $g_{n,k}$ or $g_{B,n,k}^{pri}$ is the $k$-th element of $\bm{g}_{n}$ or $\bm{g}_{B,n}^{pri}$, respectively. 
The prior mean $\bm{x}^{pri}_{A,n}\in\mathbb{R}^{d_n}$ and the variance $v^{pri}_{A,n}\in\mathbb{R}$ of module A are updated by
\begin{subequations}\label{sec:system,ssec:recei,equ:multi task turbo A_pri}
	\begin{equation}\label{sec:system,ssec:recei,equ:multi task turbo xA_pri} 		
		\bm{g}^{pri}_{A,n} =v^{pri}_{A,n}
		\begin{pmatrix}
			\frac{\bm{g}^{post}_{B,n}}{v^{post}_{B,n}}-\frac{\bm{g}^{pri}_{B,n}}{v^{pri}_{B,n}}
		\end{pmatrix},\forall n\in[N],	
	\end{equation}
	\vspace{-3mm}
	\begin{equation}\label{sec:system,ssec:recei,equ:multi task turbo vA_pri}	 	 		
		v^{pri}_{A,n} =
		\begin{pmatrix}
			\frac{1}{v^{post}_{B,n}}-\frac{1}{v^{pri}_{B,n}}
		\end{pmatrix}^{-1},\forall n\in[N].  	
	\end{equation}
\end{subequations}
\addtolength{\topmargin}{0.05in}
To sum up, given the initialization values $\bm{g}_{A,n}^{pri}=\bm{0}$ and $v_{A,n}^{pri}=v_n^{ini}$ for $\forall n\in[N]$, (\ref{sec:system,ssec:recei,equ:multi task turbo A})-(\ref{sec:system,ssec:recei,equ:multi task turbo A_pri}) iterate until some termination criterion is met, and $\bm{g}_{B,n}^{post}$ is output as $\hat{\bm{g}}_n$ for the model update in (\ref{sec:system,ssec:multi,equ:paramenter update_new}), for $\forall n\in[N]$. Here, $v_n^{ini}$ can be set to $v_n$ in (\ref{sec:system,ssec:recei,equ:Bernoulli Gaussian}). Note that in practice, $\{v_n\}_{n=1}^N$ may be difficult to determine in prior. However, empirically, the algorithm is not very sensitive to the initial variances, and thus we approximately set $v_1^{ini}=\dots=v_N^{ini}=\frac{||\bm{y}||^2}{Ns}$. Compared with the original Turbo-CS algorithm in \cite{ma_turbo_2014}, the main difference is that each subvector $\bm{g}_n$ in (\ref{sec:system,ssec:trans,equ:channel final}) has its individual prior distribution as in (\ref{sec:system,ssec:recei,equ:Bernoulli Gaussian}). 
The above process is summarized in Algorithm \ref{sec:system,ssec:alogr,alg:algorithm}.

\subsection{Performance Analysis of M-Turbo-CS}\label{Performance Analysis of Turbo-CS}
 Similarly to \cite{ma_turbo_2014}, we track the state of each $\bm{g}_n$ with its individual MSE. Combining (\ref{sec:system,ssec:recei,equ:multi task turbo vA}), (\ref{sec:system,ssec:recei,equ:multi task turbo vB_PRI}), (\ref{sec:system,ssec:recei,equ:multi task turbo vB}) and (\ref{sec:system,ssec:recei,equ:multi task turbo vA_pri}), the state evolution of M-Turbo-CS is given by 
\begin{subequations}\label{sec:conver,equ:state evolution} 		
	\begin{equation}\label{sec:conver,equ:state evolution 1} 		
		v_{B,n}^{pri} = \frac{d_n}{s}\begin{pmatrix}
			\sum_{k=1}^Nv_{A,k}^{pri}+\sigma^2
		\end{pmatrix}-v_{A,n}^{pri}, \forall n\in[N],	
	\end{equation}
	\vspace{-3mm}
	\begin{equation}\label{sec:conver,equ:state evolution 2} 	
		\frac{1}{v_{A,n}^{pri}}=\frac{1}{ mmse_n(1/v_{B,n}^{pri})} - \frac{1}{v_{B,n}^{pri}},\forall n\in[N],
	\end{equation}
\end{subequations}
with  
$ 		
mmse_n(1/v_{B,n}^{pri}) \equiv \mathbb{E}\left[|\bm{g}_n-\mathbb{E}[\bm{g}_n \mid \bm{g}_n+\bm{n}_n]|^{2}\right].  	
$ The fixed point of (\ref{sec:conver,equ:state evolution}), denoted by $\{v_n^\star\}_{n=1}^N$, tracks the normalized output MSEs of the M-Turbo-CS  algorithm, where $v_n^\star$ is the fixed-point MSE of $\bm{g}_n$ for $\forall n\in N$. The fixed point $\{v_n^\star\}_{n=1}^N$ gives an analytical characterization of the communication error after turbo recovery. This error bound will be used in the next subsection for convergence analysis of the overall OA-FMTL. Moreover, we will numerically show that the state evolution in (\ref{sec:conver,equ:state evolution}) agrees well with simulation, and that M-Turbo-CS is able to efficiently suppress inter-task interference.

\begin{algorithm}[h]
	\caption{ OA-FMTL alogrithm.}\label{sec:system,ssec:alogr,alg:algorithm}  		
	\begin{algorithmic}[1] 
		\STATE \textbf{Initialize} $\bm{\triangle}_{nm}^{(1)}=\bm{0},\forall n\in[N],m\in[M]$
		\FOR{$t = 1,2,\dots$}
		\STATE \textbf{Each device $m$ does in parallel:}	
		
		\STATE Compute $\{\bm{g}_{nm}^{(t)}\}_{n=1}^N$ with $\{D_{nm}\}_{n=1}^N$ and $\{\bm{\theta}_{n}^{(t)}\}_{n=1}^N$
		\STATE Compute $\{\bm{s}_m^{(t)}\}_{n=1}^N$ via (\ref{sec:system,ssec:trans,equ:error accumulated})-(\ref{sec:system,ssec:trans,equ:sparse}) and (\ref{sec:system,ssec:trans,equ:compress})-(\ref{sec:system,ssec:trans,equ:m_size})
		\STATE Compute $\{\bm{\triangle}_{nm}^{(t+1)}\}_{n=1}^N$ via (\ref{sec:system,ssec:trans,equ:get error})
		
		\STATE Send $\bm{s}_m^{(t)}$ to the ES
		synchronously with other devices
		\STATE \textbf{ES does:}
		\STATE Receive $\bm{r}^{(t)}$ via (\ref{sec:system,ssec:trans,equ:channel}) and compute $\bm{y}^{(t)}$ via (\ref{sec:system,ssec:trans,equ:channel final})
		\STATE \textbf{Initialize} $\bm{g}_{A,n}^{pri}=\bm{0}$, $v_{A,n}^{pri}=v_n^{ini}, \forall n\in[N]$
		\REPEAT 
		\STATE Update $\{\bm{g}_{B,n}^{post}\}_{n=1}^N$, via (\ref{sec:system,ssec:recei,equ:multi task turbo A})-(\ref{sec:system,ssec:recei,equ:multi task turbo A_pri})
		\UNTIL{convergence}
		\STATE $\hat{\bm{g}}_n^{(t)} = \bm{g}_{B,n}^{post}, \forall n\in[N]$
		\STATE $\bm{\theta}_{n}^{(t+1)} = \bm{\theta}_{n}^{(t)} -\eta \hat{\bm{g}}_n^{(t)}, \forall n\in[N]$
		\STATE Broadcast $\{\bm{\theta}_n^{(t+1)}\}_{n=1}^N$ to all the devices
		\ENDFOR
	\end{algorithmic} 	
\end{algorithm}

\subsection{Convergence Analysis of OA-FMTL}\label{sec:conver,ssec:state evolution}
\newtheorem{assumption}{Assumption}
\newtheorem{lemma}{Lemma}
\newenvironment{sproof}{{\indent \indent \it Sketch of proof:}}{\hfill$\blacksquare$\par}
\newenvironment{proof}{{\indent \indent \it  Proof:}}{\hfill$\blacksquare$\par}
\newtheorem{thm}{Theorem}
\newtheorem{remark}{Remark}
\newtheorem{coro}{Corollary}


We now analyze the performance of the OA-FMTL framework. 
With (\ref{update}) and (\ref{sec:system,ssec:multi,equ:paramenter update_new}), we analyze the bound of model updating error $\bm{e}^{(t)}\in\mathbb{R}^{d_n}$ at the $t$-th round as
\begin{subequations}\label{sec:system,ssec:conve,equ:error_all}
	\begin{align}\label{sec:system,ssec:conve,equ:error}		
		||\bm{e}^{(t)}||^2 &=\left|\left|\nabla \mathcal{L}(\bm{\theta}^{(t)})-
		\begin{bmatrix} 				\hat{\bm{g}}_1^{(t)^T},\dots,\hat{\bm{g}}_N^{(t)^T} 			\end{bmatrix}^T\right|\right|^2\tag{25a}\\ 			&=\sum_{n=1}^N||\nabla_n \mathcal{L}_n(\bm{\theta}_{n})-\hat{\bm{g}}_n^{(t)}||^2\tag{25b} \\&=\sum_{n=1}^N||\bm{e}_n^{(t)}||^2,\tag{25c}	
	\end{align} 
\end{subequations}
with the error $\bm{e}_{n}^{(t)}\in\mathbb{R}^{d_n}$ from task $n$ characterized by
\begin{subequations}\label{sec:system,ssec:conve,equ:errorn_all}
	\begin{align}\label{sec:system,ssec:conve,equ:error_n}  		
		\bm{e}_{n}^{(t)} = & \ \nabla_n \mathcal{L}_n(\bm{\theta}_{n})-\hat{\bm{g}}_n^{(t)}\tag{26a}\\ 		
		\nonumber =& \ \underbrace{\frac{\sum_{m=1}^{M} K_{nm}\bm{g}_{nm}^{(t)}}{\sum_{m=1}^MK_{nm}}-\frac{\sum_{m=1}^{M} K_{nm}\bm{g}_{nm}^{\operatorname{sp}(t)}}{\sum_{m=1}^MK_{nm}}}_{\text{Sparsification error}}\\
		\nonumber&\ +\underbrace{\frac{\sum_{m=1}^{M} K_{nm}\bm{g}_{nm}^{\operatorname{sp}(t)}}{\sum_{m=1}^MK_{nm}}-\frac{\sum_{m\in\mathcal{M}^{(t)}}^{(t)} K_{nm}\bm{g}_{nm}^{\operatorname{sp}(t)}}{\sum_{m\in\mathcal{M}^{(t)}}K_{nm}}}_{\text{User selection error}}\\ 		&\ +\underbrace{\frac{\sum_{m\in\mathcal{M}^{(t)}}^{(t)} K_{nm}\bm{g}_{nm}^{\operatorname{sp}(t)}}{\sum_{m\in\mathcal{M}^{(t)}}K_{nm}} 		-\hat{\bm{g}}_n^{(t)}}_{\text{Estimation error from M-Turbo-CS}}\tag{26b}\\=& \ \bm{e}_{n,1}^{(t)}+\bm{e}_{n,2}^{(t)}+\bm{e}_{n,3}^{(t)}, \tag{26c}	
	\end{align} 
\end{subequations}		
where $\bm{e}_{n,1}^{(t)}\in\mathbb{R}^d$ denotes the sparsification error caused by the step in (\ref{sec:system,ssec:trans,equ:sparse}), $\bm{e}_{n,2}^{(t)}\in\mathbb{R}^d$ denotes the user selection error caused by the fading channel with (\ref{sec:system,ssec:trans,equ:power_allocation}), and $\bm{e}_{n,3}^{(t)}\in\mathbb{R}^d$ denotes the estimation error caused by the imperfect recovery of M-Turbo-CS. With (\ref{sec:system,ssec:conve,equ:errorn_all}), we bound $||\bm{e}_n^{(t)}||^2$ as
\begin{align}\label{expectation error}
	||\bm{e}_n^{(t)}||^2 \leq  3(||\bm{e}_{n,1}^{(t)}||^2+||\bm{e}_{n,2}^{(t)}||^2+||\bm{e}_{n,3}^{(t)}||^2),
\end{align}
by using the triangle inequality and the inequality of arithmetic means. The analysis of $||\bm{e}_{n,1}^{(t)}||^2$ defined in (\ref{error:e1}) and $||\bm{e}_{n,2}^{(t)}||^2$ defined in (\ref{error:e2}) basically follows the process in \cite{amiri_machine_2020} and \cite{liu_reconfigurable_2021}, respectively, and is omitted for simplicity. Besides, from the performance analysis of M-Turbo-CS in Section \ref{Performance Analysis of Turbo-CS}, M-Turbo-CS converges to the fixed point $\{{v_n^\star}^{(t)}\}_{n=1}^N$ at the $t$-th round, for $\forall n\in[N]$. Thus, we have $||\bm{e}_{n,3}^{(t)}||^2=d_n{v_n^{\star}}^{(t)}$.

To proceed, following the convention in stochastic optimization \cite{friedlander_erratum_2011} to ensure an upper bound on the loss $\mathcal{L}_n(\cdot)$ for each task $n$, we make some assumptions below.
\begin{assumption}\label{sec:conver,asu:assumption 1}
	$\mathcal{L}_n(\cdot)$ is strongly convex with some (positive) parameter $\Omega_n$. That is, $\mathcal{L}_n(\bm{y}) \geq \mathcal{L}_n(\bm{x})+(\bm{y}-\bm{x})^{T} \nabla_n \mathcal{L}_n(\bm{x})+\frac{\Omega_n}{2} \| \bm{y}-\bm{x} \|^{2}, \forall \bm{x}, \bm{y} \in \mathbb{R}^{d_n},\forall n\in[N]$.
\end{assumption}  	 	\begin{assumption}\label{sec:conver,asu:assumption 2}
	The gradient $\nabla_n \mathcal{L}_n(\cdot)$ is Lipschitz continuous with some (positive) parameter $L$. That is, $\|\nabla_n \mathcal{L}_n(\bm{x})-\nabla_n \mathcal{L}_n(\bm{y})\| \leq L_n\|\bm{x}-\bm{y}\|, \forall \bm{x}, \bm{y} \in \mathbb{R}^{d_n},\forall n\in[N]$.
\end{assumption}  	 	\begin{assumption}\label{sec:conver,asu:assumption 3} 			
	$\mathcal{L}_n(\cdot)$ is twice-continuously differentiable, for $\forall n\in[N]$.
\end{assumption}  	 	\begin{assumption}\label{sec:conver,asu:assumption 4}
	The gradient with respect to any training sample, denoted by $\nabla_n l_n(\bm{\theta}_{n}; \cdot)$, is upper bounded at $\bm{\theta}_{n}$ as 	$$\nonumber\left\|\nabla_n l_n(\bm{\theta}_{n},\bm{u}_{nmk})\right\|^{2} \leq \beta_{n,1}+ \beta_{n,2} \left\|\nabla_n \mathcal{L}_n\left(\bm{\theta}_n\right)\right\|^{2},\forall n\in[N]$$ for some constants $\beta_{n,1} \geq 0$ and $\beta_{n,2}>0$.
\end{assumption}

Assumptions \ref{sec:conver,asu:assumption 1}-\ref{sec:conver,asu:assumption 4} lead to an upper bound on the loss function $\mathcal{L}_n(\bm{\theta}_n^{(t+1)})$ with respect to the recursion (\ref{sec:system,ssec:multi,equ:paramenter update_new}) with an arbitrary choice of the learning rate $\eta$. The details are given in the following lemma.
\begin{lemma}\label{lemma:update} 
	Let $\mathcal{L}_n(\cdot)$ satisfy Assumptions \ref{sec:conver,asu:assumption 1}-\ref{sec:conver,asu:assumption 4}. At the $t$-th training round, with $L_n =1/\eta$, we have
	\begin{align}\label{equ:loss_update} 
		\mathcal{L}_n(\bm{\theta}_n^{(t+1)}) \leq \mathcal{L}_n(\bm{\theta}_n^{(t)})-\frac{1}{2L_n}\|\nabla_n \mathcal{L}_n(\bm{\theta}_n^{(t)})\|^{2}+\frac{1}{2L_n} \|\bm{e}_n^{(t)}\|^{2},
	\end{align}
	where the Lipschitz constant $L_n$ is defined in Assumption \ref{sec:conver,asu:assumption 2}.
\end{lemma}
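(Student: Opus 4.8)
The plan is to obtain (\ref{equ:loss_update}) from the standard quadratic upper bound (``descent lemma'') implied by smoothness, and then to substitute the definition of the error vector. First I would invoke Assumptions \ref{sec:conver,asu:assumption 2} and \ref{sec:conver,asu:assumption 3}: since $\mathcal{L}_n(\cdot)$ is twice-continuously differentiable and $\nabla_n \mathcal{L}_n(\cdot)$ is $L_n$-Lipschitz on $\mathbb{R}^{d_n}$, integrating $\nabla_n \mathcal{L}_n$ along the segment joining $\bm{x}$ and $\bm{y}$ yields $\mathcal{L}_n(\bm{y}) \le \mathcal{L}_n(\bm{x}) + (\bm{y}-\bm{x})^{T}\nabla_n \mathcal{L}_n(\bm{x}) + \frac{L_n}{2}\|\bm{y}-\bm{x}\|^{2}$ for all $\bm{x},\bm{y}\in\mathbb{R}^{d_n}$.

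Next I would specialize this to the one-step recursion (\ref{sec:system,ssec:multi,equ:paramenter update_new}) with $\bm{x}=\bm{\theta}_n^{(t)}$ and $\bm{y}=\bm{\theta}_n^{(t+1)}=\bm{\theta}_n^{(t)}-\eta\hat{\bm{g}}_n^{(t)}$. Using the lemma's hypothesis $L_n=1/\eta$, the increment is $\bm{y}-\bm{x}=-\frac{1}{L_n}\hat{\bm{g}}_n^{(t)}$, so the descent inequality becomes $\mathcal{L}_n(\bm{\theta}_n^{(t+1)}) \le \mathcal{L}_n(\bm{\theta}_n^{(t)}) - \frac{1}{L_n}(\hat{\bm{g}}_n^{(t)})^{T}\nabla_n \mathcal{L}_n(\bm{\theta}_n^{(t)}) + \frac{1}{2L_n}\|\hat{\bm{g}}_n^{(t)}\|^{2}$. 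I would then eliminate $\hat{\bm{g}}_n^{(t)}$ in favour of the true gradient and the error through the defining relation (\ref{sec:system,ssec:conve,equ:error_n}), namely $\hat{\bm{g}}_n^{(t)} = \nabla_n \mathcal{L}_n(\bm{\theta}_n^{(t)}) - \bm{e}_n^{(t)}$. Expanding $(\hat{\bm{g}}_n^{(t)})^{T}\nabla_n \mathcal{L}_n(\bm{\theta}_n^{(t)})$ and $\|\hat{\bm{g}}_n^{(t)}\|^{2}$ in these terms, the inner-product cross terms $\pm\frac{1}{L_n}(\bm{e}_n^{(t)})^{T}\nabla_n \mathcal{L}_n(\bm{\theta}_n^{(t)})$ produced by the linear and quadratic contributions cancel exactly, and what remains is precisely $\mathcal{L}_n(\bm{\theta}_n^{(t)}) - \frac{1}{2L_n}\|\nabla_n \mathcal{L}_n(\bm{\theta}_n^{(t)})\|^{2} + \frac{1}{2L_n}\|\bm{e}_n^{(t)}\|^{2}$, which is (\ref{equ:loss_update}). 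I would write this out in two or three displayed lines.

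There is essentially no obstacle: the argument is the textbook smoothness bound plus an algebraic substitution, and the only point needing a little care is the bookkeeping of signs so that the cross terms genuinely cancel — a cancellation that is exactly what pins down the step size $\eta=1/L_n$. It is worth noting (perhaps in a remark) that Assumptions \ref{sec:conver,asu:assumption 1} and \ref{sec:conver,asu:assumption 4} play no role in this lemma; they are reserved for the subsequent step, where strong convexity converts $\|\nabla_n \mathcal{L}_n(\bm{\theta}_n^{(t)})\|^{2}$ into a contraction on the loss gap and Assumption \ref{sec:conver,asu:assumption 4} is used to bound $\|\bm{e}_n^{(t)}\|^{2}$.
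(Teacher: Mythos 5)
Your proof is correct: the descent lemma from the $L_n$-Lipschitz gradient, the substitution $\hat{\bm{g}}_n^{(t)} = \nabla_n \mathcal{L}_n(\bm{\theta}_n^{(t)}) - \bm{e}_n^{(t)}$ with step size $\eta = 1/L_n$, and the exact cancellation of the cross terms yield (\ref{equ:loss_update}) precisely. This is the same argument the paper invokes by citing \cite[Lemma 2.1]{friedlander_erratum_2011} rather than writing it out, and your side remark that Assumptions \ref{sec:conver,asu:assumption 1} and \ref{sec:conver,asu:assumption 4} are not used here is also accurate.
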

\begin{proof}
	See \cite[Lemma 2.1]{friedlander_erratum_2011}.
\end{proof}

We are now ready to derive an upper bound of the difference between the training loss and the optimal loss, i.e., $\mathcal{L}(\bm{\theta}^{(t+1)})-\mathcal{L}(\bm{\theta}^{(\star)})$. 
\begin{thm}\label{thm:1}
	With Assumptions \ref{sec:conver,asu:assumption 1}-\ref{sec:conver,asu:assumption 4},
	\begin{align}\label{equ:final_convergence}
		\nonumber\mathcal{L}(\bm{\theta}^{(t+1)})&-\mathcal{L}(\bm{\theta}^{(\star)})\leq
		\left(\mathcal{L}(\bm{\theta}^{(1)})-\mathcal{L}(\bm{\theta}^{(\star)})\right)\prod_{t'=1}^t\max_n\Upsilon_n^{(t')}\\ & +\sum_{n=1}^{N}\sum_{t''=1}^{t}C_n^{(t'')}\prod_{t'=t''}^{t-1}\Upsilon_n^{(t'+1)},
	\end{align}
	where operation $\prod_a^b(\cdot)=1$ when $a>b$, $\mathcal{L}(\cdot)$ is the total empirical loss function defined in (\ref{sec:system,ssec:multi,equ:optimum function}), $\bm{\theta}^{(1)}$ is the initial system model parameter, and the functions $\Upsilon_n^{(t)}$, $C_n^{(t)}$ for each task $n$ are defined as
	\begin{subequations}\label{define:psi,upsilon_C}
		\begin{align}
			\Upsilon_n^{(t)}\triangleq1-\frac{\Omega_n}{L_n}+\frac{2\Omega_n\beta_{n,2}\Psi_n^{(t)}}{L_n},\\
			C_n^{(t)}\triangleq\frac{\beta_{n,1}}{L_n}\Psi_n^{(t)}+\frac{3d_n{v_n^\star}^{(t)}}{2L_n},
		\end{align}
	\end{subequations}
	with the function $\Psi_n^{(t)}$ for each task $n$ defined by
	\begin{align}\label{define:psi}
		&\nonumber\Psi_n^{(t)}\triangleq\\&\frac{3}{2}\left(\left(\frac{2r_n-r_n^{t}-r_n^{t+1}}{1-r_n}\right)^2+\left(2-2\frac{\sum_{m \in \mathcal{M}^{(t)}} K_{nm}}{\sum_{m=1}^MK_{nm}}\right)^{2}\right),
	\end{align}
	where $\mathcal{M}^{(t)}$ is defined in (\ref{sec:system,ssec:trans,equ:m_size}), $r_n^t$ denotes the $t$-th power of $r_n$, and $r_n=\sqrt{(d_n-k_n)/d_n}<1$ with $k_n$ is defined above (\ref{sec:system,ssec:trans,equ:sparse}). In the above, the parameters ${L_n, \Omega_n, \beta_{n,1}, \beta_{n,2}}$ are defined in Assumptions \ref{sec:conver,asu:assumption 1}-\ref{sec:conver,asu:assumption 4}.
\end{thm}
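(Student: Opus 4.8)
\emph{Proof strategy.} The plan is to collapse the claim into a scalar linear recursion for each task and then recombine the tasks only at the very end. Because the objective $\mathcal{L}(\bm{\theta})=\sum_{n=1}^N\mathcal{L}_n(\bm{\theta}_n)$ in (\ref{sec:system,ssec:multi,equ:optimum function}) is block-separable and the update (\ref{sec:system,ssec:multi,equ:paramenter update_new}) acts on each block $\bm{\theta}_n$ independently, the global optimum satisfies $\mathcal{L}(\bm{\theta}^{(\star)})=\sum_{n=1}^N\mathcal{L}_n(\bm{\theta}_n^{(\star)})$ with $\bm{\theta}_n^{(\star)}=\arg\min_{\bm{\theta}_n}\mathcal{L}_n$; hence it suffices to control $\delta_n^{(t)}\triangleq\mathcal{L}_n(\bm{\theta}_n^{(t)})-\mathcal{L}_n(\bm{\theta}_n^{(\star)})\ge 0$ for each $n$ and sum over $n$ afterwards. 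Subtracting $\mathcal{L}_n(\bm{\theta}_n^{(\star)})$ from both sides of Lemma~\ref{lemma:update} gives $\delta_n^{(t+1)}\le\delta_n^{(t)}-\tfrac{1}{2L_n}\|\nabla_n\mathcal{L}_n(\bm{\theta}_n^{(t)})\|^2+\tfrac{1}{2L_n}\|\bm{e}_n^{(t)}\|^2$, so the two things to do are (i) turn the negative gradient term into a contraction on $\delta_n^{(t)}$, and (ii) bound $\|\bm{e}_n^{(t)}\|^2$ in a form that matches (\ref{define:psi,upsilon_C}).

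For (i) I would use the quadratic-growth (Polyak--Lojasiewicz) inequality implied by Assumption~\ref{sec:conver,asu:assumption 1}: minimizing the strong-convexity lower bound of $\mathcal{L}_n(\bm{y})$ over $\bm{y}$ at $\bm{x}=\bm{\theta}_n^{(t)}$ yields $\|\nabla_n\mathcal{L}_n(\bm{\theta}_n^{(t)})\|^2\ge 2\Omega_n\delta_n^{(t)}$. For (ii) I would start from (\ref{expectation error}), $\|\bm{e}_n^{(t)}\|^2\le 3(\|\bm{e}_{n,1}^{(t)}\|^2+\|\bm{e}_{n,2}^{(t)}\|^2+\|\bm{e}_{n,3}^{(t)}\|^2)$, and treat the three pieces of (\ref{sec:system,ssec:conve,equ:errorn_all}) separately. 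The M-Turbo-CS term is already pinned down by the state-evolution analysis of Section~\ref{Performance Analysis of Turbo-CS}: $\|\bm{e}_{n,3}^{(t)}\|^2=d_n{v_n^\star}^{(t)}$. The sparsification term is handled as in \cite{amiri_machine_2020}: from (\ref{sec:system,ssec:trans,equ:error accumulated}) and (\ref{sec:system,ssec:trans,equ:get error}) one has $\bm{g}_{nm}^{(t)}-\bm{g}_{nm}^{\operatorname{sp}(t)}=\bm{\triangle}_{nm}^{(t+1)}-\bm{\triangle}_{nm}^{(t)}$, while the top-$k_n$ selection gives $\|\bm{g}_{nm}^{\operatorname{ac}(t)}-\bm{g}_{nm}^{\operatorname{sp}(t)}\|\le r_n\|\bm{g}_{nm}^{\operatorname{ac}(t)}\|$ with $r_n=\sqrt{(d_n-k_n)/d_n}<1$; unrolling the resulting recursion for $\|\bm{\triangle}_{nm}^{(t)}\|$ as a geometric series and then bounding gradient norms via Assumption~\ref{sec:conver,asu:assumption 4} produces the factor $\big(\tfrac{2r_n-r_n^{t}-r_n^{t+1}}{1-r_n}\big)^2$. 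The user-selection term is handled as in \cite{liu_reconfigurable_2021} via the gap between $\sum_{m=1}^MK_{nm}$ and the scheduled weight $\sum_{m\in\mathcal{M}^{(t)}}K_{nm}$ of (\ref{sec:system,ssec:trans,equ:m_size}), producing $\big(2-2\tfrac{\sum_{m\in\mathcal{M}^{(t)}}K_{nm}}{\sum_{m=1}^MK_{nm}}\big)^2$. Collecting these and once more invoking Assumption~\ref{sec:conver,asu:assumption 4} to expose $\beta_{n,1}+\beta_{n,2}\|\nabla_n\mathcal{L}_n(\bm{\theta}_n^{(t)})\|^2$, the first two pieces combine to $3(\|\bm{e}_{n,1}^{(t)}\|^2+\|\bm{e}_{n,2}^{(t)}\|^2)\le 2\Psi_n^{(t)}\big(\beta_{n,1}+\beta_{n,2}\|\nabla_n\mathcal{L}_n(\bm{\theta}_n^{(t)})\|^2\big)$ with $\Psi_n^{(t)}$ exactly as in (\ref{define:psi}).

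Substituting (i) and (ii) into the inequality from Lemma~\ref{lemma:update} and regrouping, the coefficient of $\|\nabla_n\mathcal{L}_n(\bm{\theta}_n^{(t)})\|^2$ becomes $-\big(\tfrac{1}{2L_n}-\tfrac{\beta_{n,2}\Psi_n^{(t)}}{L_n}\big)$. Assuming this coefficient is nonpositive — equivalently $\beta_{n,2}\Psi_n^{(t)}\le\tfrac12$, a smallness requirement on the sparsification/scheduling losses that moreover forces $0\le\Upsilon_n^{(t)}\le1$ since $\Omega_n\le L_n$ — one may substitute the PL bound in the favorable direction and obtain the scalar recursion $\delta_n^{(t+1)}\le\Upsilon_n^{(t)}\delta_n^{(t)}+C_n^{(t)}$, with $\Upsilon_n^{(t)}$ and $C_n^{(t)}$ exactly as in (\ref{define:psi,upsilon_C}). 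Iterating this recursion from $t'=1$ to $t$ gives $\delta_n^{(t+1)}\le\delta_n^{(1)}\prod_{t'=1}^t\Upsilon_n^{(t')}+\sum_{t''=1}^tC_n^{(t'')}\prod_{t'=t''}^{t-1}\Upsilon_n^{(t'+1)}$. Summing over $n$, using $\sum_n\delta_n^{(t+1)}=\mathcal{L}(\bm{\theta}^{(t+1)})-\mathcal{L}(\bm{\theta}^{(\star)})$ and $\sum_n\delta_n^{(1)}=\mathcal{L}(\bm{\theta}^{(1)})-\mathcal{L}(\bm{\theta}^{(\star)})$, and bounding $\prod_{t'=1}^t\Upsilon_n^{(t')}\le\prod_{t'=1}^t\max_{n'}\Upsilon_{n'}^{(t')}$ for each $n$ (valid since every $\Upsilon_n^{(t')}\ge0$ and every $\delta_n^{(1)}\ge0$), delivers (\ref{equ:final_convergence}).

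I expect the main obstacle to be the sparsification-error estimate: correctly propagating the error-accumulation feedback across all rounds so that the geometric series closes into the stated $(2r_n-r_n^t-r_n^{t+1})/(1-r_n)$ shape, while keeping the dependence on the true gradient affine (via Assumption~\ref{sec:conver,asu:assumption 4}) so that it merges cleanly with the PL term rather than destroying the contraction factor $\Upsilon_n^{(t)}$. The secondary delicate point is purely bookkeeping: checking that the size conditions on $\Psi_n^{(t)}$, hence on $\Upsilon_n^{(t)}$, required for the PL substitution and for the $\max_n$ step remain valid throughout the run.
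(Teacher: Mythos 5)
Your proposal follows essentially the same route as the paper's proof: the same three-way error decomposition with the bounds on $\bm{e}_{n,1}^{(t)}$, $\bm{e}_{n,2}^{(t)}$ from \cite{amiri_machine_2020} and \cite{friedlander_erratum_2011}, the fixed-point MSE for $\bm{e}_{n,3}^{(t)}$, the Polyak--Lojasiewicz step from strong convexity, the per-task scalar recursion $\delta_n^{(t+1)}\le\Upsilon_n^{(t)}\delta_n^{(t)}+C_n^{(t)}$, and the final summation over $n$ with the $\max_n$ bound. If anything you are slightly more careful than the paper, which applies the PL substitution and the $\max_n$ step without explicitly recording the sign conditions $\beta_{n,2}\Psi_n^{(t)}\le\tfrac12$ and $\Upsilon_n^{(t)}\ge0$ that you correctly flag as necessary.
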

\begin{proof}
	See Appendix \ref{proof:Theorem 1}.
\end{proof}

From Theorem \ref{thm:1}, we see that $\mathcal{L}(\bm{\theta}^{(t+1)})-\mathcal{L}(\bm{\theta}^{(\star)})$ denotes the difference between the training loss and the optimal loss at $t$-th round, which is upper bounded by the right side of the inequality in (\ref{equ:final_convergence}). In particular, $\mathcal{L}(\bm{\theta}^{(1)})-\mathcal{L}(\bm{\theta}^{(\star)})$ in this bound denotes the difference between the initialization loss and the optimal loss, and the second term of this bound is associated with the system error, including the sparsification error, the M-Turbo-CS estimation error, and the user selection error. We note that $\mathcal{L}(\bm{\theta}^{(t+1)})$ converges with speed $\Upsilon^{(t)}=\max_n\Upsilon_n^{(t')}$ when $\Upsilon^{(t)}<1$. This condition holds when we choose $\Upsilon_n^{(t)}<1$ for $\forall n\in[N]$ at each communication round $t$. Empirically, we find that the proposed OA-FMTL scheme always converges with appropriately chosen system parameters. Moreover, we emphasize that the upper bound in (\ref{equ:final_convergence}) gives a performance metric of OA-FMTL and thus can be potentially used for system performance optimization. Due to space limitation, we leave more detailed discussions on system optimization to the extended version of this paper.

\section{Experimental Results}
In this section, we validate our proposed OA-FMTL scheme with experiments. Specifically, we consider federated image classification tasks on the MNIST and the Fashion-MNIST datasets, i.e., $N=2$, among $M=20$ local devices and an ES. For each task, we train a convolutional neural network with two convolution layers and two fully connected layers. Since user selection is not the focus of this paper, the channel gain $\{h_m^{(t)}\}_{m=1}^M$ and the threshold $\zeta^{(t)}$ at each communication round defined in (\ref{sec:system,ssec:trans,equ:power_allocation}) are appropriately chosen to ensure that $\mathcal{M}^{(t)}$ consists of $M=20$ devices during the whole training process. Besides, we set $P=0.1, d_1=d_2=10920, k_1=k_2=0.1d, \gamma^{(t)}=1000, \eta=0.1$ for the following experiments. For comparison, we include the following two baseline schemes:
\begin{itemize}
	\item Scheme I: Time division multiplexing (TDM) among the tasks is applied, i.e., each task is assigned with an orthogonal time slot to avoid inter-task interference. OA-FL \cite{amiri_machine_2020} is applied in transmission, and Turbo-CS \cite{ma_turbo_2014} is applied to recover the model aggregation at the ES.
	\item Scheme II: The proposed OA-FMTL framework is applied to jointly transmit the model parameters of all the tasks concurrently, and the Turbo-CS algorithm is used to individually recover the model aggregation of each task without considering the existence of inter-task interference.
\end{itemize}

Fig. \ref{fig:turboperformance} shows the numerical results of signal recovery at $t=90$ round. We see that the simulation results agree well with their corresponding evolution results, and that there is no state evolution result of Scheme II. Compared with Scheme II, we also notice that our proposed scheme performs better in overcoming inter-task interference. Besides, due to avoiding inter-task interference through TDM, Scheme I outperforms others in terms of converged MSE. However, subsequent experiment will show that the learning performance of our proposed scheme is comparable to that of Scheme I.

\begin{figure}[h]
\centering
\includegraphics[width=1\linewidth, height=0.2\textheight]{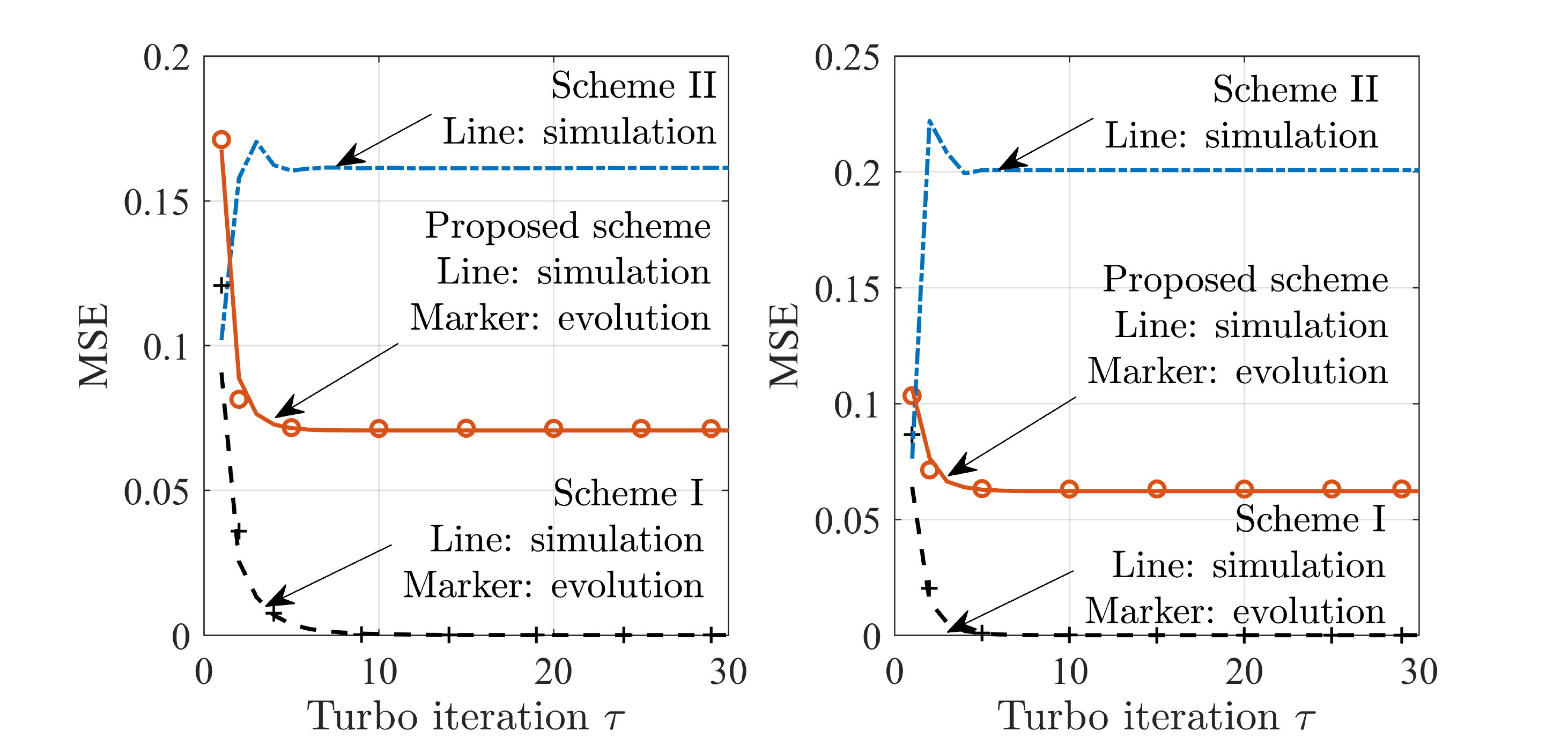}
\caption{The MSE performance on two tasks with $\sigma_w^2=0.1, 2s/d_1=2s/d_2=3/4, K_{nm}=2500, \forall n\in[N], \forall m\in[M],$ at the communication round $t=90$. (a) For MNIST task, $\lambda_1^{(90)}=0.5515, v_1^{(90)}=0.2175$. (b) For Fashion-MNIST task, $\lambda_2^{(90)}=0.5230, v_2^{(90)}=0.1281$.}
\label{fig:turboperformance}
\end{figure}

In Fig. \ref{fig:accuracy}, we measure the performance of each task in terms of test accuracy versus communication round $t$. We observe that the test accuracy of our proposed scheme is close to that of Scheme I, and both schemes converge to an accuracy of 0.9 on the MNIST task as well as to an accuracy of 0.72 on the Fashion-MNIST task. We also note that the test accuracy of our proposed scheme is better than that of Scheme II, and is only about 2\% lower than that of the ideal error-free bound, which demonstrates the excellent interference suppression capability of OA-FMTL with M-Turbo-CS.

\begin{figure}[h]
	\centering
	\includegraphics[width=0.95\linewidth, height=0.25\textheight]{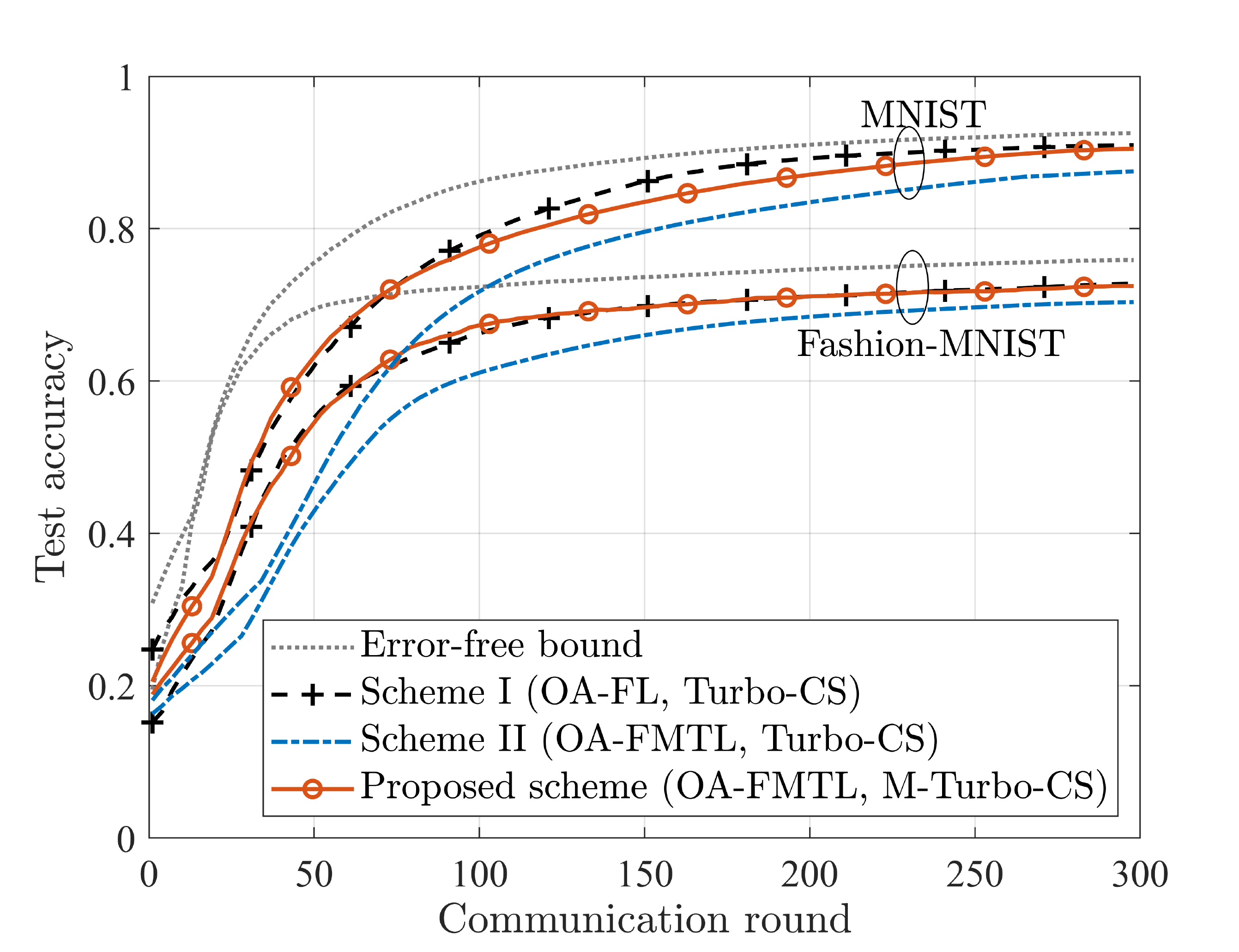}
	\caption{The test accuracies of the two tasks with $2s/d_1=2s/d_2=3/4, \sigma_w^2=0.1, K_{nm}=2500, \forall n\in[N], \forall m\in[M]$.}
	\label{fig:accuracy}
\end{figure}

For further comparison on the number of channel uses, we define $\xi_n^{max}$ as the maximum test accuracy of each task $n$, and define $t^\star(\xi)$ as the total required rounds of communications for every task $n$ to reach its target accuracy $\xi\cdot\xi_n^{max}$, where $\xi$ is called the relative target accuracy. Thus, $t^\star(\xi)$ for Scheme II and our proposed scheme is given by
\begin{equation}\label{sec:system,ssec:perform,equ:MOFL perform 1}		 	 		
	t^\star(\xi) =max\{t_1^\star(\xi),\dots,t_N^\star(\xi)\},
\end{equation}
where $t_n^\star(\xi)$ is the required communication rounds of task $n$ to reach its target accuracy $\xi\cdot\xi_n^{max}$. Owing to the TDM technology, $t^\star(\xi)$ of Scheme I is given by
\begin{equation}		 	
	t^\star(\xi) =\sum_{n=0}^N t_n^\star(\xi). 
\end{equation}
Fig. \ref{fig:performance} depicts the total required communication rounds $t^\star$ versus relative target accuracy $\xi$. We see that our proposed OA-FMTL scheme significantly outperforms the other two baseline schemes, and that the total required communication rounds of our proposed scheme to complete the $N=2$ tasks are only half of that of Scheme I at any value of $\xi$. In addition, we note that Scheme II also requires fewer communication rounds than Scheme I, which demonstrates the advantage of non-orthogonal transmission.
\begin{figure}[h]
	\centering
	\includegraphics[width=0.95\linewidth, height=0.23\textheight]{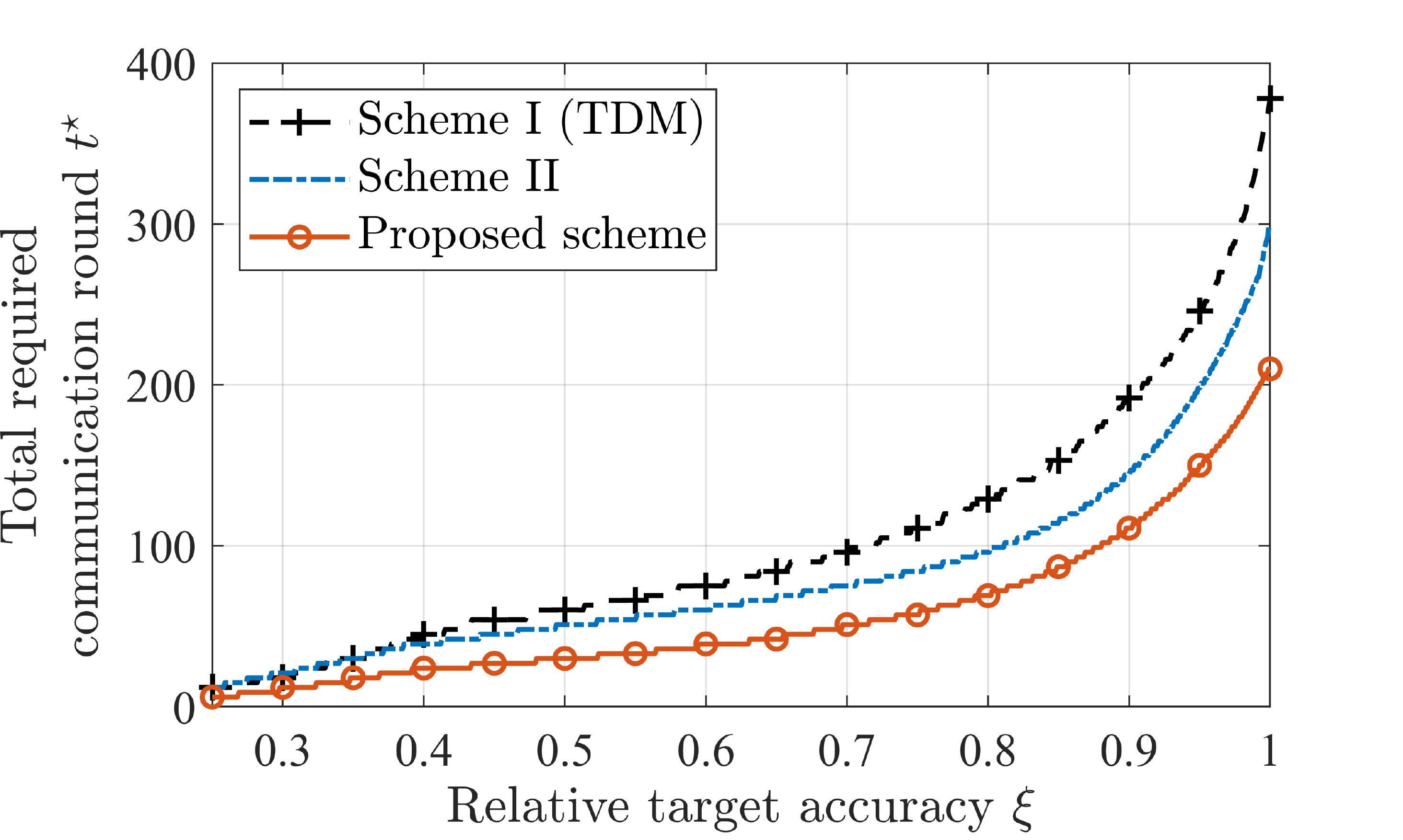}
	\caption{The required communication rounds $t^\star$ of interference-free, with $2s/d_1=2s/d_2=3/4, \xi_1^{max}=0.90, \xi_2^{max}=0.72, \sigma_w^2=0.1, K_{nm}=2500,\forall n\in[N],\forall m\in[M]$.}
	\label{fig:performance}
\end{figure}

\section{Conclusion}
We developed an OA-FMTL framework with over-the-air computation to support multiple learning tasks over an non-orthogonal uplink channel. Furthermore, we modified the original Turbo-CS algorithm in the compressed sensing context to reconstruct the sparsified model aggregation updates at ES. Both the convergence analysis and experimental results showed that our proposed OA-FMTL framework is not sensitive to the inter-task interference, thereby achieving significant reduction in the total number of channel uses with only slight learning performance degradation.

\appendices

\section{Proof of Theorem 1}\label{proof:Theorem 1}

First, following \cite[Appendix A]{amiri_machine_2020}, we bound $||\bm{e}_{n,1}^{(t)}||^2$ as
\begin{align}\label{error:e1}	 
	\left\|\bm{e}_{n,1}^{(t)}\right\|^2\leq&\left (\frac{2r_n-r_n^{t}-r_n^{t+1}}{1-r_n}\right)^2\\
	&\nonumber\times\left(\beta_{n,1}+\beta_{n,2}\left\|\nabla_n \mathcal{L}_n\left(\bm{\theta}_n^{(t)}\right)\right\|^{2}\right),		 
\end{align}
where $r_n=\sqrt{(d_n-k_n)/d_n}<1$ with $k_n$ defined above (\ref{sec:system,ssec:trans,equ:sparse}), $r_n^t$ denotes the $t$-th power of $r_n$, and $\beta_{n,1}$ as well as $\beta_{n,2}$ are some constants defined in Assumption \ref{sec:conver,asu:assumption 4}.

Then, following the first equation in \cite[Section 3.1]{friedlander_erratum_2011}, we bound $||\bm{e}_{n,2}^{(t)}||^2$ as
\begin{align}\label{error:e2}
	\left\|\bm{e}_{n,2}^{(t)}\right\|^{2}\leq\frac{4}{{K_n}^{2}}&\left(K_n-\sum_{m \in \mathcal{M}^{(t)}} K_{nm}\right)^{2} \\	
	&\nonumber\times\left(\beta_{n,1}+\beta_{n,2}\left\|\nabla_n \mathcal{L}_n\left(\bm{\theta}_n^{(t)}\right)\right\|^{2}\right),
\end{align}
where $K_n=\sum_{m=1}^MK_{nm}$.

Combining (\ref{sec:system,ssec:conve,equ:error_all}), (\ref{sec:system,ssec:conve,equ:errorn_all}), (\ref{expectation error}), (\ref{error:e1}) and (\ref{error:e2}) at the $t$-th training round, we have 		
\begin{align}\label{trainloss_optimumloss}
	\mathcal{L}_n(\bm{\theta}_n^{(t+1)})&\leq \mathcal{L}_n(\bm{\theta}_n^{(t)})+\frac{\beta_{n,1}}{L_n}\Psi_n^{(t)}+\frac{3d_n{v_n^{\star}}^{(t)}}{2L_n} \\\nonumber& -\frac{\|\nabla_n \mathcal{L}_n(\bm{\theta}_n^{(t)})\|^{2}}{2L_n}(1-2\beta_{n,2}\Psi_n^{(t)}),
\end{align}
where $\Psi_n^{(t)}$ is defined in (\ref{define:psi}). From \cite[eq. (2.4)]{friedlander_erratum_2011}, we have $||\nabla \mathcal{L}_n(\bm{\theta}_n^{(t)})||^{2} \geq 2 \Omega_n(\mathcal{L}_n(\bm{\theta}_n^{(t)})-\mathcal{L}_n(\bm{\theta}_n^{(\star)}))$. Subtracting $\mathcal{L}_n(\bm{\theta}_n^{(\star)})$ on both sides of (\ref{trainloss_optimumloss}), applying the above inequality and we obtain
\begin{align}\label{equ:theta_star}
	\nonumber\mathcal{L}_n&(\bm{\theta}_n^{(t+1)})-\mathcal{L}_n(\bm{\theta}_n^{(\star)})\leq
	\left(\mathcal{L}_n(\bm{\theta}_n^{(1)})-\mathcal{L}_n(\bm{\theta}_n^{(\star)})\right)\\ & \times \prod_{t'=1}^{t}\Upsilon_n^{(t')}+ \sum_{t''=1}^{t-1}C_n^{(t'')}\prod_{t'=t''}^{t-1}\Upsilon_n^{(t'+1)} + C_n^{(t)},
\end{align}
where $\Upsilon_n^{(t)}$ and $C_n^{(t)}$ are defined in (\ref{define:psi,upsilon_C}).
Finally, combining (\ref{equ:theta_star}) with (\ref{sec:system,ssec:multi,equ:optimum function}), and we obtain (\ref{equ:final_convergence}), which completes the proof.

\bibliographystyle{IEEEtran}
\bibliography{Output}
\end{document}